\newtheorem{theorem}{Theorem}
\newtheorem{lemma}{Lemma}
\newtheorem{corollary}{Corollary}
\DeclareMathOperator{\var}{Var}
\DeclareMathOperator*{\argmax}{arg\,max}
\DeclareMathOperator{\tr}{tr}
\DeclareMathOperator{\vol}{vol}
\DeclareMathOperator{\sign}{sign}
\DeclareMathOperator{\E}{\mathbb{E}}
\DeclareMathOperator{\T}{\top}
\begin{document}
%
\title{An Impossibility Result for High Dimensional Supervised Learning}

\author{
  \IEEEauthorblockN{M.~H.~Rohban, P.~Ishwar,
    B.~Orte$\mathrm{n}^\dagger$, W.~C.~Karl, and V.~Saligrama}
  \IEEEauthorblockA{
    ECE Department Boston University, \hfill{$\dagger$Turn, Inc.~CA USA} \\
    Email: \{mhrohban,pi,srv,wckarl\}@bu.edu, \hfill{burkay.orten@turn.com}}
}


%


\maketitle

\begin{abstract}
We study high-dimensional asymptotic performance limits of binary
supervised classification problems where the class conditional
densities are Gaussian with unknown means and covariances and the
number of signal dimensions scales faster than the number of labeled
training samples. We show that the Bayes error, namely the minimum
attainable error probability with complete distributional knowledge
and equally likely classes, can be arbitrarily close to zero and yet
the limiting minimax error probability of every supervised learning
algorithm is no better than a random coin toss. In contrast to related
studies where the classification difficulty (Bayes error) is made to
vanish, we hold it constant when taking high-dimensional
limits. 
In contrast to VC-dimension based minimax lower bounds that consider
the worst case error probability over {\em all} distributions that
have a fixed Bayes error, our worst case is over the family of
Gaussian distributions with constant Bayes error.
We also show that a nontrivial asymptotic minimax error probability
can only be attained for parametric subsets of zero measure (in a
suitable measure space). These results expose the fundamental
importance of prior knowledge and suggest that unless we impose strong
structural constraints, such as sparsity, on the parametric space,
supervised learning may be ineffective in high dimensional small
sample settings.
\end{abstract}


%
\IEEEpeerreviewmaketitle

\section{Introduction}

In a number of applications ranging from medical imaging to economics,
one encounters inference problems that suffer from the ``curse of
dimensionality'', namely the situation where the observed signals are
high-dimensional and we lack sufficient labeled training samples from
which to accurately learn models and make reliable decisions. This may
be true even when the underlying decision problem becomes ``easy''
with perfect knowledge of models or latent variables. For example, in
detecting the presence or absence of stroke, high-dimensional
tomographic X-ray projections are measured, though a stroke may affect
only tissue properties in a {\it localized} spatial region and may be
easily detectable if one knew where to look. Labeled training samples
are typically limited in this context due to the high cost of engaging
domain experts. Research in recent years has therefore focused on
leveraging prior knowledge in the form of sparsity or other latent
low-dimensional structure to improve decision making.

Our aim in this work is to expose certain fundamental limitations of
supervised learning in high dimensional small sample settings and
highlight the fundamental necessity of strong structural constraints
(prior knowledge), such as sparsity, for attaining nontrivial
asymptotic error rates. Towards this end we study the high-dimensional
asymptotic performance limit of binary supervised classification where
the class conditional densities are Gaussian with unknown means and
covariances. If $d$ and $n$ respectively denote the number of signal
dimensions and the number of labeled training samples, our focus is on
the ``big $d$ small $n$'' asymptotic regime where $n/d \rightarrow 0$
as $d \rightarrow \infty$. In previous related work, either $n/d
\rightarrow c > 0$ as $d \rightarrow \infty$ or the classification
difficulty (Bayes error) converges to zero as $d \rightarrow \infty$
\cite{Don04, Singh10}, or the focus was on special families of
learning rules such as Naive-Bayes, banded covariance structure
\cite{Bickel04}, and plug-in rules \cite{Shao12, Orten11}.  In
contrast, we hold the Bayes error constant when taking
high-dimensional limits.

We establish two key results in this paper: 1) An impossibility
result: When the number of signal dimensions scales faster than the
number of labeled samples at constant classification difficulty, the
asymptotic minimax classification error probability of {\it any
  supervised classification algorithm} cannot converge to anything
less than half. 2) Necessity of ``structure'' in parameter set:
Nontrivial asymptotic minimax error probability is attainable only for
parametric subsets of zero Haar measure.

\section{Related results from VC Theory}
There are several well known probabilistic lower bounds for the error
probability in the {\em distribution-free} setting of learning
\cite{Dev95}. These bounds are typically based on the VC dimension $V$
of a set of classifiers containing the optimal Bayes rule. For the
case when the Bayes error $P^{*}_e$ is zero, it is known that if $n <
(V-1)/(32 \epsilon)$, then for any supervised classifier $g_n$ and all
$\epsilon \leq 1/8$ and $\delta \leq 0.01$, $\sup_{p_{X, Y}: P^{*}_e =
  0} \Pr( L_{g_n} \geq \epsilon ) \geq \delta$ \cite{Ehr89}, where
$p_{X, Y}$ is the joint distribution of data points and their labels,
$L_{g_n} = \Pr(g_n(X) \neq Y | X_1, Y_1, \ldots, X_n, Y_n)$, and $\{
(X_1, Y_1), \ldots, (X_n, Y_n) \}$ is the training set. This bound
implies that if $n$ is small compared to $V$, then there exist
distributions for which, with probability of at least $0.01$, the
conditional Bayes error of every classifier is larger than $1/8$. For
the case when $P^{*}_e = c \neq 0$, it is known that if $n < V/(320
\epsilon)$ then for all supervised learning rules $g_n$, and all
$\epsilon , \delta \in (0,1/64)$, $\sup_{p_{X, Y}: P^{*}_e = c}
\Pr(L_{g_n} - c \geq \epsilon ) \geq \delta$. For linear classifiers
in $\mathbb{R}^d$, $V = (d + 1)$ and the mentioned bounds prove the
impossibility of learnability in the \emph{distribution-free} high
dimensional setting. However, these bounds are known to be pessimistic
as they are proved by constructing pathological adversarial
distributions $p_{X|Y}$ whose support is concentrated on $V$ points in
order to make the error of any learning rule in the hypothesis space
larger than $\epsilon$ with probability of at least $\delta$. It has
been suggested that these bounds do not hold for some practical choices of
$p_{X, Y}$ \cite{Oblo92}.

There also exist impossibility results for the {\it fixed
  distribution} setting \cite{Bene91} where a fixed and known
distribution $p_{X}$ is assumed and classifiers are assumed to belong
to a specific set $\mathcal{C}$. For $\Pr( L_{g_n} \leq \epsilon )
\geq 1-\delta$ it is {\em necessary} that $n \geq \log ((1 - \delta)
n_c)$ where $n_c$ is the $\epsilon$ covering number of $\mathcal{C}$
with respect to $p_X$ \cite{Bene91}.

The learning scenario discussed in this paper differs from both these
settings. We consider $p_{X|Y, \theta}$ to be a \emph{Gaussian
  distribution with an unknown set of parameters $\theta$}. Hence,
unlike the second setting, $p_X$ is not a fixed distribution, but
belongs to a \emph{family} of Gaussian distributions parameterized by
different choices of $\theta$. However, this family is much more
restricted than the set of {\em all} distributions, which is assumed
in the distribution-free setting. In addition, we establish a stronger
notion of impossibility that corresponds to taking $\epsilon =
\frac{1}{2}$ and $\delta = 1$ asymptotically, i.e., the worst-case
error probability within the family is not less than half
asymptotically. It should also be noted that our notion of
impossibility is not available in the distribution-free setting, and
additional effort is required to establish it in the
fixed-distribution setting.

\section{Problem Formulation}

\noindent {\bf Binary classification with Gaussian class
  conditional densities:} 
Let $X \in \mathbb{R}^d$ denote the observed signal (data), $Y \in
\{-1,+1\}$ the latent binary class label with $p_Y(+1) = p_Y(-1) =
1/2$, and $p_{X|Y,\theta}(x|y,\theta) = \mathcal{N}(\mu_y,\Sigma)(x)$,
where $\mu_{+1}, \mu_{-1} \in \mathbb{R}^d$, $\mu_{+1} \neq \mu_{-1}$
are mean vectors for classes $+1$ and $-1$ respectively, $\Sigma$ is a
covariance matrix common to both classes, and $\theta :=
(\mu_{+1},\mu_{-1},\Sigma)$ denotes the tuple of parameters. Thus
$(X,Y)|\theta \sim p_{X|Y,\theta}(x|y,\theta)p_Y(y)$. The minimum
probability of error classification rule, henceforth referred to as
the {\it optimum} rule, is the maximum aposteriori probability
(\text{MAP}) rule and is given by
\begin{equation} \label{MAPR}
\begin{split}
\widehat{y}^{*}(x) &=  \argmax_{y \in \{-1,+1\}}
p_{Y|X,\theta}(y|x,\theta) \\
&= \sign \left(\Delta^T\Sigma^{+}(x - \mu)\right)
\end{split}
\end{equation}
where $\mu := \frac{1}{2}(\mu_{+1} + \mu_{-1})$, $\Delta := \mu_{+1} -
\mu_{-1}$, and $\Sigma^{+}$ is the pseudoinverse of $\Sigma$. The
error probability of the optimum rule (Bayes error) is given by
\begin{equation} \label{BRisk}
P_e^{*} =
Q\left(\frac{1}{2}\left\|\left(\Sigma^{+}\right)^{\frac{1}{2}}\left(\mu_{+1}-\mu_{-1}\right)\right\|\right)
=: Q(\alpha/2)
\end{equation}
where $Q(t) := \int_t^\infty
\frac{1}{\sqrt{2\pi}}\exp\{-\frac{1}{2}t^2\} dt$, $\alpha =
\|(\Sigma^{+})^{\frac{1}{2}} (\mu_{+1} - \mu_{-1})\|$, and the minimum
attainable error probability with complete distributional knowledge
and equally likely classes, i.e., the ``classification difficulty'',
is given by $Q(\alpha/2)$. We will assume that $\alpha > 0$ so that
the Bayes error is nontrivial, i.e, strictly less than $1/2$. \\

\noindent {\bf Supervised classification rules:} Let $\mathcal{T}_n :=
\{(X_i,Y_i), i = 1,\ldots,n\}$ be a set of $n$ labeled training
samples that are independent and identically distributed (iid)
according to $p_{X|Y,\theta} \cdot p_Y$ where $\theta$ belongs to a
{\it known} set of feasible parameters $\Theta$.  Let $X_0$ be a test
sample (independent of $\mathcal{T}_n$) whose {\it true} class label
$Y_0$ is unobservable and needs to be estimated. A supervised
classification rule is a measurable mapping
\[
\widehat{y}_{n,d}: \mathbb{R}^d \times \mathcal{T}_n \rightarrow
\{-1,+1\}
\]
from the test data space to the set of class labels constructed using
an algorithm that has access to the set of training samples
$\mathcal{T}_n$ and knowledge of the form of
$p_{X,Y|\theta}(x,y|\theta)$ and $\Theta$ but no direct knowledge of
$\theta$ itself. \\

\noindent {\bf Constant difficulty parameter sets:} We are interested
in taking constant-difficulty high-dimensional limits. To this end we
define $\Theta_0(\alpha)$ to be the set of all $\theta$ values for
which the Bayes error is equal to $Q(\alpha/2)$ (see \eqref{BRisk}):
\begin{equation*}
\Theta_0(\alpha) :=
\left\{ \left(\mu_{+1},\mu_{-1},\Sigma\right): \left\|\ \left(
\Sigma^{+} \right)^{\frac{1}{2}} \left(\mu_{+1} -
\mu_{-1}\right)\right\| = \alpha
\right\}.
\end{equation*}


A canonical subset of $\Theta_0$ of particular interest is one in
which the covariance is spherical and the means are constrained to be
on opposites ends of a $d$-dimensional sphere:
\begin{equation*}
\Theta_{\text{Sphere}}(\alpha) :=
\left\{\left({\mathbf h}, -{\mathbf h},\beta^2 {\mathbf I} \right):
\lVert {\mathbf h} \rVert = 1, \beta = 2/\alpha
\right\}.
\end{equation*}
Clearly, $\Theta_{\text{Sphere}}(\alpha) \subseteq
\Theta_0(\alpha)$. This special parametric set corresponds to the
scenario in which $X$ can be represented as $X = Y {\mathbf h} + Z$
where $Z$ is white Gaussian noise that is independent of $Y$. \\

%

\noindent{\bf Error probabilities:} 
%
%
%
Let
\[
P_{e|\theta}(\widehat{y}_{n,d}) :=
%
%
\Pr\left(\widehat{y}_{n,d}(X_0) \neq Y_0 \right)
\]
denote the expected error probability of the classifier
$\widehat{y}_{n,d}$ averaged across training samples
$\mathcal{T}_n$ for some parameter $\theta$
and let
\[
P_{e}(n,d,\Theta, \widehat{y}_{n,d}) := \sup_{\theta \in
  \Theta} P_{e|\theta}(\widehat{y}_{n,d})
\]
be the maximum expected error probability of the classifier
$\widehat{y}_{n,d}$ over the parameter set $\Theta$ which
depends on the number of labeled training samples $n$ and the number
of signal dimensions $d$. \\

\noindent{\bf Goal:} We aim to gain an understanding of how
$P_{e}(n,d,\Theta, \widehat{y}_{n,d})$ behaves in the
constant-difficulty high dimensional setting where $d,n \rightarrow
\infty$, $n/d, n/\text{rank}(\Sigma^{+}) \rightarrow 0$, and $\Theta
\subseteq \Theta_0(\alpha)$ is a sequence of constant-difficulty
parameter sets.
%

\section{Main Results}

\begin{theorem} \label{mainImpossResult}
For any sequence of classifiers $\widehat{y}_{n,d}$, we have
\begin{equation*}
\liminf\limits_{(d,n/d)\rightarrow(\infty,0)}
P_{e}(n,d,\Theta_{\text{Sphere}}, \widehat{y}_{n,d}) \geq
\frac{1}{2}
\end{equation*}
\end{theorem}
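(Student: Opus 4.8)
\emph{Proof plan.} The plan is to lower-bound the worst-case error by a Bayes risk and then show the latter tends to $\tfrac12$. Fix $\theta = (\mathbf h, -\mathbf h, \beta^2 \mathbf I) \in \Theta_{\text{Sphere}}(\alpha)$ with $\beta = 2/\alpha$, so that under $\theta$ we may write $X = Y\mathbf h + Z$ with $Z \sim \mathcal N(0,\beta^2\mathbf I)$. Endow the nuisance direction $\mathbf h$ with the uniform (Haar) prior $\pi$ on the unit sphere $\mathbb S^{d-1}$, and keep the uniform prior on $Y_0 \in \{-1,+1\}$. For any fixed classifier $\widehat y_{n,d}$ the sup dominates any average, and any average dominates the optimal Bayes rule, so
\[
P_{e}(n,d,\Theta_{\text{Sphere}}, \widehat y_{n,d}) = \sup_{\theta} P_{e|\theta}(\widehat y_{n,d}) \ \ge\ \int P_{e|\theta}(\widehat y_{n,d})\,d\pi(\mathbf h) \ \ge\ \inf_{g}\int P_{e|\theta}(g)\,d\pi(\mathbf h) \ =:\ P_e^{\mathrm{Bayes}},
\]
the infimum being over all rules $g$ measurable in the observables $(X_0,\mathcal T_n)$. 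Since this holds for every $\widehat y_{n,d}$, it suffices to show $P_e^{\mathrm{Bayes}} \to \tfrac12$ as $d\to\infty,\ n/d\to0$.

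The next step is to identify the Bayes-optimal rule explicitly. Conditioning on the observed training labels, the statistics $W_i := Y_i X_i = \mathbf h + \widetilde Z_i$ are iid $\mathcal N(\mathbf h,\beta^2\mathbf I)$, so the sample mean $\widehat{\mathbf h} := \tfrac1n\sum_{i=1}^n Y_i X_i = \mathbf h + \bar Z$ with $\bar Z \sim \mathcal N(0,\tfrac{\beta^2}{n}\mathbf I)$ is sufficient for $\mathbf h$; hence $(\widehat{\mathbf h}, X_0)$ is sufficient for deciding $Y_0$. Writing the joint density of $(\widehat{\mathbf h}, X_0, \mathbf h)$ under $Y_0=\pm1$ and integrating $\mathbf h$ against $\pi$, the quadratic terms $\|\widehat{\mathbf h}\|^2,\|X_0\|^2$ and the constant $\|\mathbf h\|^2=1$ factor out symmetrically, leaving $p_{\pm}(\widehat{\mathbf h},X_0) \propto C(\widehat{\mathbf h},X_0)\,F(\|n\widehat{\mathbf h}\pm X_0\|)$, where $F(\|a\|) := \int_{\mathbb S^{d-1}}\exp(\beta^{-2}a^T\mathbf h)\,d\pi(\mathbf h)$ is a strictly increasing function of $\|a\|$ and $C$ is symmetric under $X_0\mapsto -X_0$. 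Because $\|n\widehat{\mathbf h}+X_0\|^2 - \|n\widehat{\mathbf h}-X_0\|^2 = 4n\,\widehat{\mathbf h}^T X_0$, the likelihood ratio is monotone in $\widehat{\mathbf h}^T X_0$, so the Bayes rule is exactly $\sign(\widehat{\mathbf h}^T X_0)$.

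By the $X_0\mapsto -X_0$ symmetry the two conditional error probabilities coincide, so $P_e^{\mathrm{Bayes}} = \Pr(\widehat{\mathbf h}^T X_0 < 0 \mid Y_0=+1)$. Under $Y_0=+1$, $T := \widehat{\mathbf h}^T X_0 = (\mathbf h+\bar Z)^T(\mathbf h+Z_0) = 1 + \mathbf h^T Z_0 + \bar Z^T\mathbf h + \bar Z^T Z_0$, whence $\E[T]=1$ while $\var(T)$ is dominated by $\var(\bar Z^T Z_0) = d\beta^4/n \to \infty$ (the terms $\mathbf h^T Z_0$ and $\bar Z^T\mathbf h$ contribute bounded and $O(1/n)$ variance, respectively). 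The leading term $\bar Z^T Z_0 = \sum_{j=1}^d \bar Z_j Z_{0,j}$ is a sum of $d$ iid mean-zero summands, so a Lyapunov/Lindeberg CLT yields $(T-1)/\sqrt{\var(T)} \Rightarrow \mathcal N(0,1)$. Consequently $P_e^{\mathrm{Bayes}} = \Pr\!\big((T-1)/\sqrt{\var(T)} < -1/\sqrt{\var(T)}\big) \to \tfrac12$, because $1/\sqrt{\var(T)}\to0$; taking $\liminf$ over $(d,n/d)\to(\infty,0)$ then gives the theorem.

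I expect the main obstacle to be the exact determination of the Bayes rule: integrating the isotropic Gaussian likelihood against the Haar prior on the sphere. This hinges on the cancellation of all $\mathbf h$-dependent quadratic terms (thanks to $\|\mathbf h\|=1$ being constant on $\mathbb S^{d-1}$) and on the monotonicity of $F$ in $\|a\|$. This is precisely what lets me avoid bounding the total variation between the two $d$-dimensional mixtures directly and instead reduce everything to the scalar statistic $\widehat{\mathbf h}^T X_0$, whose mean stays $O(1)$ while its standard deviation grows like $\sqrt{d/n}$; verifying the CLT and confirming that the subleading noise terms are negligible is then routine.
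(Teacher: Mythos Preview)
Your proof is correct and follows the same architecture as the paper: randomize $\mathbf h$ uniformly over $\mathbb S^{d-1}$, identify the induced Bayes rule as $\sign(\widehat{\mathbf h}^{\T} X_0)$ via the monotone radial integral, and show its risk tends to $\tfrac12$. The only variation is in this last step---the paper conditions on $(\mathcal T_n,\mathbf h)$ to write the error as $\E[Q(-W/\beta)]$ with $W=(1+\mathbf h^{\T}\bar Z)/\|\mathbf h+\bar Z\|\xrightarrow{p}0$ and then applies dominated convergence, whereas you compute $\Pr(T<0)$ unconditionally via a Lyapunov CLT for the triangular array $\bar Z^{\T}Z_0=\sum_j \bar Z_j Z_{0,j}$; both routes are sound and essentially equivalent.
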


\begin{corollary}
For any sequence of parameter sets $\Theta$ with
$\Theta_{\text{Sphere}} \subseteq \Theta$, and any sequence of
classifiers $\widehat{y}_{n,d}$, we have
\begin{equation*}
\liminf\limits_{(d, n/d) \rightarrow (\infty, 0)} P_e(n, d, \Theta,
\widehat{y}_{n,d}) \geq \frac{1}{2}
\end{equation*}
\end{corollary}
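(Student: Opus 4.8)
The plan is to prove the impossibility result by a Bayesian averaging (information-theoretic) argument over the parameter set $\Theta_{\text{Sphere}}(\alpha)$. Since $\Theta_{\text{Sphere}}$ is parameterized by a single unit vector $\mathbf{h}$ on the sphere $S^{d-1}$ (with $\beta = 2/\alpha$ fixed), I would place a prior on $\theta$ by drawing $\mathbf{h}$ uniformly from the uniform (Haar) measure on $S^{d-1}$. Because the worst-case error dominates any average, it suffices to lower-bound the Bayes-averaged error

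\begin{equation*}
\bar{P}_e(n,d) := \int P_{e|\theta}(\widehat{y}_{n,d}) \, d\pi(\mathbf{h}) \leq P_e(n,d,\Theta_{\text{Sphere}}, \widehat{y}_{n,d}),
\end{equation*}

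and to show that $\liminf \bar{P}_e \geq 1/2$. The key structural fact I would exploit is that in the model $X = Y\mathbf{h} + Z$, the only information the training set $\mathcal{T}_n$ carries about the random direction $\mathbf{h}$ lives in the $n$-dimensional subspace spanned by the $n$ observed samples $X_1,\dots,X_n$; the component of $\mathbf{h}$ orthogonal to this subspace is, by the rotational invariance of both the prior and the isotropic noise $Z$, left completely undetermined by the data.

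Concretely, I would proceed as follows. First, condition on the realized direction $\mathbf{h}$ and the training set, and decompose the test sample $X_0 = Y_0\mathbf{h} + Z_0$. The optimal (Bayes-given-$\mathbf{h}$) rule thresholds $\mathbf{h}^{\T}X_0$, so any usable classifier must estimate $\mathbf{h}$ well. Let $\widehat{\mathbf{h}}$ denote whatever effective direction the rule $\widehat{y}_{n,d}$ implicitly uses; its achievable correlation with the true $\mathbf{h}$ is bounded by how much the training data reveal. Second, I would quantify this via the posterior distribution of $\mathbf{h}$: exploiting rotational symmetry, after observing $\mathcal{T}_n$ the conditional law of $\mathbf{h}$ restricted to the $(d-n)$-dimensional orthogonal complement of $\mathrm{span}(X_1,\dots,X_n)$ remains uniform. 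Hence the squared norm of the ``learnable'' part of $\mathbf{h}$ concentrates around $n/d$, which $\to 0$ under the hypothesis $n/d \to 0$. Third, I would feed this into the error expression: the test-time signal seen through any estimated direction scales like $\langle \widehat{\mathbf{h}}, \mathbf{h}\rangle$, whose magnitude is $O(\sqrt{n/d}) \to 0$, so the effective signal-to-noise ratio of every decision statistic vanishes. Taking expectations and using the continuity of $Q(\cdot)$ at $0$ (with $Q(0) = 1/2$) yields $P_{e|\mathbf{h}}$ averaging to at least $1/2 - o(1)$, and the $\liminf$ bound follows.

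The main obstacle is Step two/three: making rigorous the claim that \emph{every} classifier, not merely plug-in or linear rules, is limited by the vanishing learnable correlation. A measurable classifier $\widehat{y}_{n,d}$ need not be linear in $X_0$, so I cannot simply identify it with a fixed $\widehat{\mathbf{h}}$. The clean way around this is to invoke a data-processing / sufficiency argument: condition on the orthogonal projection of $X_0$ onto $\mathrm{span}(X_1,\dots,X_n)^{\perp}$, and show that conditioned on the training set, the joint law of $(X_0, Y_0)$ with $Y_0 = +1$ is asymptotically indistinguishable (in total variation) from the law with $Y_0 = -1$, because the sign-carrying signal $Y_0\mathbf{h}$ has vanishing projection onto the data-informed subspace. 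Then Le Cam's two-point / total-variation bound gives that no test can separate the two hypotheses better than chance, forcing $P_{e|\theta} \to 1/2$ on average. Controlling this total variation requires the concentration estimate $\|\Pi_{\mathrm{span}}\mathbf{h}\|^2 \approx n/d$ together with a Gaussian TV bound; that concentration, provable by standard sphere-measure or chi-squared tail arguments, is the one technical lemma I would need to establish carefully.

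The corollary is then immediate: for any sequence $\Theta \supseteq \Theta_{\text{Sphere}}(\alpha)$, taking the supremum over the larger set can only increase the worst-case error, so

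\begin{equation*}
P_e(n,d,\Theta,\widehat{y}_{n,d}) \geq P_e(n,d,\Theta_{\text{Sphere}},\widehat{y}_{n,d}),
\end{equation*}

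and passing to the $\liminf$ preserves the inequality, giving the claimed bound of $1/2$.
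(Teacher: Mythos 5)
Your final paragraph is exactly the paper's proof of this corollary: since $\Theta_{\text{Sphere}} \subseteq \Theta$, the supremum defining $P_e(n,d,\Theta,\widehat{y}_{n,d})$ dominates $P_e(n,d,\Theta_{\text{Sphere}},\widehat{y}_{n,d})$, and taking $\liminf$ preserves the inequality; combined with Theorem~1 this is the whole argument, so on the corollary itself you and the paper coincide. Where you diverge is that you also re-derive the underlying impossibility for $\Theta_{\text{Sphere}}$ rather than citing it. Your re-derivation starts identically to the paper (randomize $\mathbf{h}$ uniformly on $\mathcal{S}^{d-1}$, bound the worst case below by the Bayes-averaged error), but then lower-bounds the averaged error differently: the paper explicitly computes the $\theta$-marginalized MAP rule, shows it reduces to $\sign\left(x_0^{\T}\sum_{i=1}^n y_i x_i\right)$, and evaluates its error as an expected $Q$-function whose argument is shown to vanish in probability; you instead invoke a Le Cam/total-variation argument, arguing that conditioned on $\mathcal{T}_n$ the two class-conditional laws of $X_0$ merge because only the $O(\sqrt{n/d})$-norm projection of $\mathbf{h}$ onto the data span is learnable. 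Your route is sound in outline and arguably more portable (it never needs the closed form of the MAP rule; in fact the likelihood depends on $\mathbf{h}$ only through $\mathbf{h}^{\T}\sum_i Y_iX_i$, so the posterior symmetry you need does hold), but it leaves unproved the concentration of $\lVert\Pi_{\mathrm{span}}\mathbf{h}\rVert^2$ under the posterior and the resulting TV bound, which you correctly flag; the paper's computational route buys an explicit, self-contained verification via its two lemmas at the cost of being tied to the Gaussian/spherical structure.
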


\begin{corollary}
Let $\Theta_{\text{subset}} := \{ ({\mathbf h}, -{\mathbf h}, \beta^2
{\mathbf I}) \in \Theta_{\text{Sphere}}, \mathbf{h} \in \mathcal{H}
\subseteq \mathcal{S}^{d-1} \}$ where $\mathcal{S}^{d-1}$ is the unit
$(d-1)$-sphere in $\mathbb{R}^d$. Let $\vol({\mathcal{H}}) \triangleq
\Pr_{H \sim U(\mathcal{S}^{d-1})}(H \in \mathcal{H})$, where
$U(\mathcal{S}^{d-1})$ denotes the uniform distribution over
$\mathcal{S}^{d-1}$. If for a sequence of classifiers
$\widehat{y}_{n,d}$,
\[
\limsup\limits_{(d, n/d) \rightarrow (\infty, 0)} P_e(n, d,
\Theta_{\text{Sphere}}, \widehat{y}_{n,d}) = \frac{1}{2}
\]
and 
\[
\lim_{d\rightarrow\infty}\vol({\mathcal{H}}) > 0
\] 
then
\[
\limsup\limits_{(d, n/d) \rightarrow (\infty, 0)} P_e(n, d,
\Theta_{\text{subset}}, \widehat{y}_{n,d}) \geq \frac{1}{2}.
\]
%
%
\end{corollary}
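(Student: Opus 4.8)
The plan is to convert the worst-case (supremum) error over a parameter set into an average-case (Bayes) error by placing a prior on the direction $\mathbf{h}$, and then to transfer the lower bound that is available on the whole sphere to the positive-measure piece $\mathcal{H}$. Throughout I write $P_{e|H}(\widehat{y}_{n,d})$ for $P_{e|\theta}(\widehat{y}_{n,d})$ at the sphere parameter $\theta=(\mathbf{h},-\mathbf{h},\beta^2\mathbf{I})$ determined by $H=\mathbf{h}$. Since $\Theta_{\text{subset}}$ is parametrized by $\mathbf{h}\in\mathcal{H}$ with $\beta=2/\alpha$ fixed, the hypothesis $\lim_{d\to\infty}\vol(\mathcal{H})>0$ guarantees that for all large $d$ the conditional law $U(\mathcal{H})$ (the uniform distribution $U(\mathcal{S}^{d-1})$ restricted to $\mathcal{H}$) is a well-defined prior on $\Theta_{\text{subset}}$. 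Because a supremum dominates any average, the first step is the bound
\begin{equation*}
P_e(n,d,\Theta_{\text{subset}},\widehat{y}_{n,d}) \;\geq\; \E_{H\sim U(\mathcal{H})}\bigl[P_{e|H}(\widehat{y}_{n,d})\bigr].
\end{equation*}

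Next I would decompose the full-sphere average through the law of total expectation. Writing $v:=\vol(\mathcal{H})$, $a:=\E_{H\sim U(\mathcal{H})}[P_{e|H}(\widehat{y}_{n,d})]$, and $b:=\E_{H\sim U(\mathcal{H}^c)}[P_{e|H}(\widehat{y}_{n,d})]$, we have
\begin{equation*}
\E_{H\sim U(\mathcal{S}^{d-1})}\bigl[P_{e|H}(\widehat{y}_{n,d})\bigr] \;=\; v\,a + (1-v)\,b.
\end{equation*}
Two facts then squeeze $a$ from below. On the one hand, the averaging argument underlying Theorem~\ref{mainImpossResult} shows that the full-sphere average is itself at least the Bayes risk of the prior $U(\mathcal{S}^{d-1})$, which tends to $1/2$; hence the left-hand side is $\tfrac12-o(1)$. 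On the other hand, the hypothesis $\limsup P_e(n,d,\Theta_{\text{Sphere}},\widehat{y}_{n,d})=\tfrac12$ together with the liminf bound of Theorem~\ref{mainImpossResult} forces $P_e(n,d,\Theta_{\text{Sphere}},\widehat{y}_{n,d})\to\tfrac12$, and since $b\leq\sup_{H\in\mathcal{S}^{d-1}}P_{e|H}(\widehat{y}_{n,d})=P_e(n,d,\Theta_{\text{Sphere}},\widehat{y}_{n,d})$ we obtain $b\leq\tfrac12+o(1)$. Substituting both into the identity and solving for $a$ yields
\begin{equation*}
a \;\geq\; \frac{\tfrac12-o(1)-(1-v)\bigl(\tfrac12+o(1)\bigr)}{v} \;=\; \frac12 - \frac{o(1)}{v}.
\end{equation*}
Because $v\to c>0$, the term $o(1)/v$ vanishes, so $a\geq\tfrac12-o(1)$; combining with the first-step bound and taking $\limsup$ gives $\limsup P_e(n,d,\Theta_{\text{subset}},\widehat{y}_{n,d})\geq\tfrac12$, as claimed.

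The step I expect to be the crux is justifying that the \emph{average} error of the \emph{given} classifier over the whole sphere is $\tfrac12-o(1)$; this is not automatic for an arbitrary fixed $\widehat{y}_{n,d}$, and the cleanest route is to read it off from the proof of Theorem~\ref{mainImpossResult}, which lower-bounds the worst case by the average under $U(\mathcal{S}^{d-1})$ and that average by the Bayes risk of this symmetric prior, the latter converging to $1/2$ precisely because $n/d\to 0$ renders $\mathcal{T}_n$ asymptotically uninformative about the random direction $H$. The remaining delicate point is quantitative: the argument only survives because $\vol(\mathcal{H})$ stays bounded away from $0$, which keeps the division by $v$ harmless; were $\vol(\mathcal{H})\to 0$, the error could concentrate on $\mathcal{H}^c$ and the conclusion would fail, which is exactly why the positive-volume hypothesis is indispensable.
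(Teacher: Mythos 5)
Your proposal is correct and follows essentially the same route as the paper: both lower-bound the supremum over $\Theta_{\text{subset}}$ by the average under $U(\mathcal{H})$, decompose the full-sphere average via the law of total expectation over $\mathcal{H}$ and $\bar{\mathcal{H}}$, use the randomization argument from the proof of Theorem~\ref{mainImpossResult} to force that full-sphere average to $1/2$, and bound the $\bar{\mathcal{H}}$-average by the hypothesis on $\Theta_{\text{Sphere}}$, with the positive limiting volume of $\mathcal{H}$ making the final division harmless. The only difference is presentational (you argue directly by solving for the $\mathcal{H}$-average, whereas the paper argues by contradiction), and you correctly flag the one genuinely load-bearing point, namely that the $1/2$ lower bound on the full-sphere average must be extracted from the proof of Theorem~\ref{mainImpossResult} rather than from its statement.
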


The proofs of the theorem and the two corollaries are presented in
Section~\ref{ProofSec}.

\section{Discussion}
\begin{figure}[!htb]
\begin{center}
\includegraphics[scale=0.40]{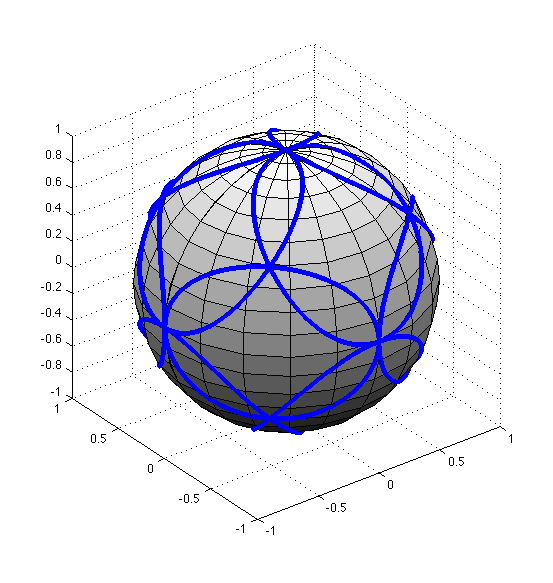}\\
\vglue -3ex
\includegraphics[scale=0.40]{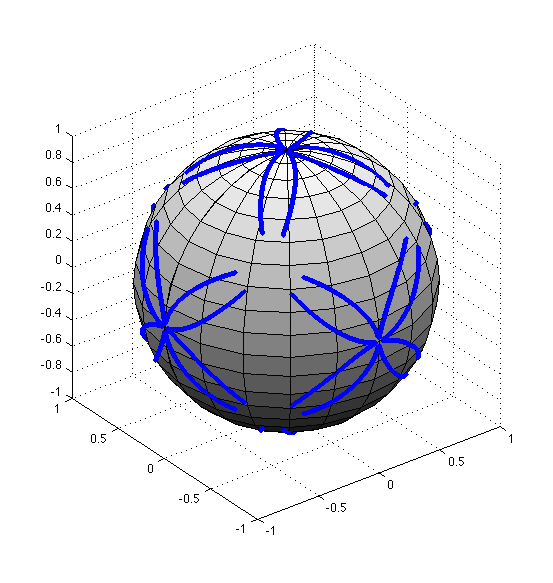}
\end{center}
\vglue -6ex
\caption{Exponential sparsity class $\mathcal{H}_{exp}$ (solid curves,
  top figure) and polynomial sparsity class $\mathcal{H}_{poly}$
  (solid curves, bottom figure) for $d = 3$.}
\label{f11}
\end{figure}
%

Theorem~\ref{mainImpossResult} informs us that even in the ``easier''
scenario where the covariance is spherical and perfectly known, the
worst case classification performance of {\it any} sequence of
supervised classifiers is asymptotically no better than the classifier
which flips an unbiased coin to make decisions. It is interesting to
note that this conclusion holds for any arbitrarily small positive
Bays error $Q(\alpha/2)$.

An immediate corollary of this theorem is the impossibility of
attaining less than half asymptotic error probability for larger
parametric sets, specifically parameter sets that contain
$\Theta_{\text{Sphere}}$ such as $\Theta_0$. These parameter sets
contain more unknowns (degrees of freedom), e.g., the covariance, but
have the same classification difficulty $Q(\alpha/2)$.

These results are consistent with previous results concerning the
asymptotic behavior 
of the so-called plug-in family of classifiers where Maximum
Likelihood (ML) estimates of parameters are plugged into the MAP rule
given in \eqref{MAPR}):
\begin{itemize}
\item Plug-in classification for $\Theta_{0}$: In \cite{Orten11} it was
  shown that for plug-in classifiers that use ML estimates of the means
  and covariance, the performance is asymptotically no better than
  half.
\item Sensing-aware classification: Here it is assumed that data is
  generated according to $X = {\mathbf h} W + Z$, where ${\mathbf h}$
  is the ``sensing subspace'', $W$ is a class-dependent scalar latent
  variable and $Z$ is white Gaussian noise independent of the class
  labels and latent variables. Assuming Gaussian class conditional
  densities for $W$, the class conditional density of $X$ would be
  Gaussian with covariance $\gamma^2 {\mathbf h} {\mathbf h}^{\T} +
  \beta^2 {\mathbf I}$. Also the mean of $X$ would be in the same
  direction ${\mathbf h}$ for the two classes. In \cite{Orten11} it
  was shown that the asymptotic probability of error of the ML
  projection classifier which is based on projecting data along the ML
  estimate of ${\mathbf h}$ is also no better than half. This can be
  seen as a special case of the Corollary 1, by considering
  ${\Theta_{\text{Sensing Aware}}} \supseteq \Theta_{\text{Sphere}}$
  where
\begin{multline*}
{\Theta_{\text{Sensing Aware}}}(\alpha) := \big\{\left(m_{1} {\mathbf
  h} ,m_{2} {\mathbf h}, \gamma^2 {\mathbf h} {\mathbf h}^{\T} +
\beta^2 {\mathbf I} \right): \\ \|{\mathbf h}\| = 1,
 \gamma \geq 0, \beta > 0,
|m_{1} - m_{2}| = \alpha\sqrt{\gamma^2+\beta^2}
\big\}.
\end{multline*}
\end{itemize}

Another important consequence of the Theorem \ref{mainImpossResult} is
Corollary~2. This result can be interpreted as implying that
nontrivial asymptotic minimax error probability is attainable only for
parametric subsets of zero Haar measure.
%
%
This highlights the {\it necessity} of some weak form of sparsity
in the set of feasible ${\mathbf h}$ values in order to attain
non-trivial asymptotic error probability. This is consistent with
previous results that have shown that the supervised classification
error can in fact converge to the Bayes error when ${\mathbf h}$
belongs to specific sparsity classes \cite{Shao12, Orten11}.

Specifically, in \cite{Orten11} it was shown that if the magnitudes of
the components of ${\mathbf h}$ decay exponentially (or even
polynomially) when reordered according to decreasing values, then
${\mathbf h}$ can be estimated consistently (in the mean square sense)
by soft-thresholding the ML estimate of ${\mathbf h}$ even in the case that $d$ 
grows sub-exponentially in terms of $n/\log(n)$. Then it can be shown that a classifier
based on projecting data onto the estimation of ${\mathbf h}$ attains the Bayes error
asymptotically.
%

To represent these two sparsity classes, $\mathcal{H}_{exp}$ and
$\mathcal{H}_{poly}$ were defined in \cite{Orten11} as bellow:
\begin{gather*}
\mathcal{H}_{exp} = \left\{ {\mathbf h} : \left| h_{(k)} \right| =
M_1(d) \alpha^k, 0 < \alpha < 1 \right\} \\
\mathcal{H}_{poly} = \left\{ {\mathbf h} : \left| h_{(k)} \right| =
M_2(d) k^{-\beta}, \beta > 0.5 \right\}
\end{gather*}
where $(h_{(1)}, \ldots, h_{(d)})$ are the components of ${\mathbf h}$
in decreasing order of magnitude.
%
%
Since there is only one degree of freedom in each set, the Haar
measure of these two sets vanish when $d \geq 3$. \hbox{Figure
  \ref{f11}} illustrates these two sets (dark solid curves) on the
unit sphere in 3-dimensional space. These sets satisfy the necessary
conditions suggested by the Corollary 2 and in fact achieve the Bayes
error asymptotically which is better than half.
To summarize, it is essential to have strong structural assumptions on
the feasible set of parameters in order to obtain non-trivial
asymptotic classification performance in high-dimensional small sample
settings.

\section{Proofs} \label{ProofSec}

\subsection{Proof of Theorem 1}
\begin{proof}
%
The key proof-idea is to randomize the selection of $\theta \in
\Theta_{\text{Sphere}}$. The worst error probability over $\theta \in
\Theta_{\text{Sphere}}$ is not smaller than the average (expected)
error probability when $\theta$ is random. For any fixed value of
$\theta$, the training and test samples are totally independent but if
$\theta$ is random, they can become dependent {\it through}
$\theta$. Then the expected error probability (with respect to both
training data and $\theta$) of any supervised classification rule
$\widehat{y}_{n,d}$ cannot be smaller than that of the
$\theta$-distribution-induced MAP rule based on $\mathcal{T}_n$. If
the randomizing distribution for $\theta$ is carefully selected then
the lower bound can be made to converge to $1/2$ as $n$ and $d$
scale. Now we work out the details of this proof-idea.

Let $\mathcal{S}^{d-1}$ denote the unit $(d-1)$-sphere. For every
\hbox{${\mathbf h} \in \mathcal{S}^{d-1}$}, $\| {\mathbf h} \|=1$ and
$({\mathbf h},-{\mathbf h}, \beta^2 {\mathbf I}) \in
\Theta_{\text{Sphere}}$. Let \hbox{$\theta \sim (H,-H, \beta^2
  {\mathbf I})$} where $H \sim \mathrm{Uniform}(\mathcal{S}^{d-1})$
which is independent of data labels $Y_i$. Then, for every realization
$H= {\mathbf h}$, $\theta \in \Theta_{\text{Sphere}}$ and conditioned
on $H = {\mathbf h}$ (equivalently conditioned on a realization of
$\theta$), the training and test samples have the joint distribution
that was described earlier. By defining
\begin{equation}
\widehat{y}_{\text{MAP}}(X_0) \triangleq \argmax_{y_0\in\{-1,+1\}}
p_{Y_0|X_0,\mathcal{T}_n}(y_0|x_0,\mathcal{T}_n,
\Theta_{\text{Sphere}})
\end{equation}
and using the notation $P_e \triangleq P_{e}(n, d,
\Theta_{\text{Sphere}}, \widehat{y}_{n,d})$, we have :
\begin{equation}  \label{PeSimplify}
\begin{split}
P_{e} &\geq \E_{\theta \in
  \Theta_{\text{Sphere}}}\left[P_{e|\theta}(\widehat{y}_{n,d})
  \right] \\
%
%
&= \E_{\theta \in
  \Theta_{\text{Sphere}},\mathcal{T}_n}\left[\Pr\left(\widehat{y}_{n,d}(X_0)
  \neq Y_0| \mathcal{T}_n, \theta \right)\right] \\
& \stackrel{(i)}{\geq} \E_{\theta \in \Theta_{\text{Sphere}},
  \mathcal{T}_n}\left[\Pr\left(\widehat{y}_{\text{MAP}}(X_0) \neq Y_0|
  \mathcal{T}_n, \theta \right)\right]
\end{split}
\end{equation}
%
where $(i)$ holds because the MAP rule minimizes the error
probability.
%
%
%
%
Using the notation $\widehat{y}_{\text{MAP}} \triangleq
\widehat{y}_{\text{MAP}}(X_0)$, we have:
%
%
\begin{align*}
\widehat{y}_{\text{MAP}} &= \argmax_{y_0\in\{-1,+1\}}
p_{Y_0|X_0,\mathcal{T}_n}(y_0|x_0,\mathcal{T}_n,
\Theta_{\text{Sphere}}) \\
&= \argmax\limits_{y_0\in\{-1,+1\}}
p_{X_0,Y_0,\mathcal{T}_n}(x_0,y_0,\mathcal{T}_n |
\Theta_{\text{Sphere}}) \\
&= \argmax_{y_0\in\{-1,+1\}} \E_{\theta \in
  \Theta_{\text{Sphere}}}\left[p_{X_0,Y_0,\mathcal{T}_n|\theta}(x_0,y_0,\mathcal{T}_n|\theta)\right]\\
&\stackrel{(i)}{=} \argmax_{y_0\in\{-1,+1\}} \E_{\theta \in
  \Theta_{\text{Sphere}}}\left[\prod_{i=0}^{n}p_{X_i,Y_i|\theta}(x_i,y_i|\theta)\right]\\
&\stackrel{(ii)}{=} \argmax_{y_0\in\{-1,+1\}} \E_{\theta \in
  \Theta_{\text{Sphere}}}\left[\prod_{i=0}^{n}p_{X_i|Y_i,\theta}(x_i|y_i,\theta)\right]\\
&\stackrel{(iii)}{=} \argmax_{y_0\in\{-1,+1\}}
\E_H\left[\prod_{i=0}^{n}\exp\left\{ -\frac{1}{2 \beta^2}\left\|x_i-
  y_i H\right\|^2\right\}\right] \\
&= \argmax_{y_0\in\{-1,+1\}} \E_H\Big[\exp\Big\{ -\frac{1}{2
    \beta^2}\sum_{i=0}^{n} \left\|x_i\right\|^2 +
  \\ &~~~~~~~~~~~~~~~~~~~~~~~~ \left\| y_iH\right\|^2 -2 y_ix_i^TH
  \Big\}\Big]\\
&\stackrel{(iv)}{=} \argmax_{y_0\in\{-1,+1\}} \E_H\left[\exp\left\{
  \frac{1}{\beta^2} H^T \left(\sum_{i=0}^{n}
  y_ix_i\right)\right\}\right]\\
&\stackrel{(v)}{=} \argmax_{y_0\in\{-1,+1\}} \frac{1}{\beta^2}
\left\|\sum_{i=0}^{n}y_ix_i \right\| \\
&= \sign \left(x_0^{\T} \left(\sum_{i=1}^n y_ix_i\right)\right).
\end{align*}
In the above derivation, $(i)$ is because the training and test
samples are conditionally independent given $\theta$, $(ii)$ is
because $p_{Y_i}(+1) = p_{Y_i}(-1) = 0.5$ for all $i$, $(iii)$ is
because $\theta = (H,-H, \beta^2 {\mathbf I}) =
(\mu_{+1},\mu_{-1},\Sigma)$, $(iv)$ is because $\|H\| = 1$ and $y_i =
\pm 1$ for all $i$, and $(v)$ is due to the following result. 

\begin{lemma}
Let $H$ be uniformly distributed on $\mathcal{S}^{d-1}$ and $f(x) :=
\E_H[\exp\{H^{\T} x\}]$. Then $f(x)$ is a radial function that is convex
and nondecreasing in $\|x\|$.
\end{lemma}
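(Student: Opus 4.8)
The plan is to establish the three claimed properties --- radiality, convexity, and monotonicity in $\|x\|$ --- separately, in each case exploiting a symmetry of the uniform law on $\mathcal{S}^{d-1}$. First I would prove radiality. For any orthogonal matrix $O$, the vector $O^{\T}H$ is again uniformly distributed on $\mathcal{S}^{d-1}$, so
\[
f(Ox) = \E_H\left[\exp\{H^{\T} O x\}\right] = \E_H\left[\exp\{(O^{\T}H)^{\T} x\}\right] = f(x).
\]
Since $f$ is invariant under every rotation, it depends on $x$ only through $\|x\|$, and I may write $f(x) = g(\|x\|)$ for a function $g$ on $[0,\infty)$ (finiteness is not an issue, as $\|H\|=1$ gives $f(x)\le e^{\|x\|}$).

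Next I would address convexity. For each fixed $H$ the map $x \mapsto \exp\{H^{\T} x\}$ is convex, being the exponential of an affine function, and the expectation $\E_H$ preserves convexity; hence $f$ is convex on $\mathbb{R}^d$. To transfer this to the radial profile $g$, I would restrict $f$ to a single ray: fixing a unit vector $e$ and writing $x_i = r_i e$, the segment joining $x_1$ and $x_2$ stays on the ray, so that $\|\lambda x_1 + (1-\lambda)x_2\| = \lambda r_1 + (1-\lambda) r_2$, and convexity of $f$ along this segment is exactly convexity of $g$ on $[0,\infty)$.

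For monotonicity I would use the sign symmetry of the sphere: $-H$ is also uniform on $\mathcal{S}^{d-1}$, so $f$ is even, $f(-x) = f(x)$. Combined with convexity this places the minimum at the origin, since $f(0) \le \tfrac12 f(x) + \tfrac12 f(-x) = f(x)$ and $g(0) = f(0) = 1$. A convex function on $[0,\infty)$ whose minimum is attained at the left endpoint is nondecreasing --- the difference quotients of a convex function are monotone, and the quotient out of $0$ is nonnegative --- which gives that $g$, and hence $f$ as a function of $\|x\|$, is nondecreasing.

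The only genuinely delicate step is the transfer from the ambient convexity and symmetry of $f$ on $\mathbb{R}^d$ to the one-dimensional statements about $g$; everything else is a direct symmetry computation. A convenient alternative that bypasses this transfer is to reduce to the first coordinate: by radiality it suffices to evaluate $f$ at $x = r e_1$, and since the marginal $H_1$ is symmetric about $0$ one gets $g(r) = \E\!\left[\cosh(r H_1)\right]$. Because $t \mapsto \cosh(th)$ is convex and nondecreasing on $[0,\infty)$ for every fixed $h$, and both properties survive under expectation, convexity and monotonicity of $g$ follow at once.
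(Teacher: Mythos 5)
Your proof is correct, but it takes a genuinely different route from the paper's. The paper works directly with the one-dimensional profile $g(t) = f(tu)$ and differentiates under the expectation: it computes $g^{\prime}(t) = \E_H[(H^{\T}u)\exp\{tH^{\T}u\}]$ and $g^{\prime\prime}(t) = \E_H[(H^{\T}u)^2\exp\{tH^{\T}u\}] \geq 0$, then concludes nondecrease from $g^{\prime}(0)=0$ (by the sign symmetry of $H^{\T}u$) together with convexity. Your argument replaces this calculus with two structural observations: convexity of $f$ on all of $\mathbb{R}^d$ as an average of the convex functions $x \mapsto \exp\{H^{\T}x\}$, and evenness of $f$ from the symmetry $H \sim -H$, which together force the minimum to sit at the origin; monotonicity of the radial profile then follows from the monotonicity of difference quotients of a convex function. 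Your closing reduction $g(r) = \E[\cosh(rH_1)]$ is a particularly clean packaging of the same two symmetries. What your route buys is that it sidesteps the interchange of differentiation and expectation entirely (the paper performs this interchange without comment; it is harmless here since $H$ lives on a compact set, but it is an unstated step). What the paper's route buys is brevity and a slightly more quantitative statement, since the explicit formula for $g^{\prime\prime}$ shows strict convexity away from degenerate cases. Both proofs are complete and correct; your transfer of convexity and symmetry from $f$ on $\mathbb{R}^d$ to $g$ on $[0,\infty)$, which you flag as the delicate step, is handled properly.
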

\begin{proof}
Since $H$ is uniformly distributed on $\mathcal{S}^{d-1}$, $f(x)$ is a
radial function. Consider $x = t u$ where $t \in \mathbb{R}$ and
$\|u\| = 1$. If $g(t) := f(tu)$ then $g(0) = 1$ and $g$ is symmetric
since the distribution of $H$ is spherically symmetric. We also have
$g^{\prime}(t) = \E_H[(H^{\T}u)\exp\{tH^{\T}u\}]$ so that $g^{\prime}(0) =
0$, again because the distribution of $H$ is spherically
symmetric. Finally, $g^{\prime\prime}(t) = \E_H[(H^{\T}u)^2\exp\{tH^{\T}u\}]
\geq 0$ which shows that $g(t)$ is convex for $t \geq 0$. Since $g(t)$
is convex for $t \geq 0$ and $g^{\prime}(0) =0$, it is nondecreasing
for $t \geq 0$.
\end{proof}
\noindent Continuing the proof, we have : 
\begin{equation} \label{LastEq}
\begin{split}
P_{e} &\stackrel{(i)}{\geq}
\E_{\theta,\mathcal{T}_n}\left[P\left(\widehat{y}_{\text{MAP}}(X_0,\mathcal{T}_n)
  \neq Y_0| \mathcal{T}_n, \theta\right)\right] \\
&= \E_{H,\mathcal{T}_n}\bigg[ \Pr\bigg(\sign \Big(X_0^{\T}
  \sum_{i=1}^n Y_iX_i \Big)\neq Y_0\bigg| \mathcal{T}_n, H\bigg)\bigg]
\\
&= \E_{H, \mathcal{T}_n}\left[Q\left(\frac{-H^{\T}\left(\sum_{i=1}^n
    Y_iX_i\right)}{\beta \|\left(\sum_{i=1}^n
    Y_iX_i\right)\|}\right)\right] \\
&= \E_{H, \mathcal{T}_n}\left[Q\left(\frac{-(1 + H^{\T}V)}{\beta
    \sqrt{1 + 2H^{\T}V + \|V\|^2}}\right)\right]
\end{split}
\end{equation}
where $V \sim \mathcal{N}({\mathbf 0},\frac{\beta^2}{n} {\mathbf
  I}_d)$ and is independent of $H$.  This follows as we can write
$X_i$ as $X_i = Y_i H + Z_i$, where $Z_i$ are white Gaussian noise
with variance $\beta^2$, which are independent of $H$ and labels
$Y_i$. Hence $\frac{1}{n} \sum_{i = 1}^{n} X_i Y_i = H +
\frac{1}{n}\sum_{i = 1}^{n} Y_i Z_i$. Finally, by taking \hbox{$V =
  \frac{1}{n}\sum_{i = 1}^{n} Y_i Z_i$}, it follows that $V \sim
\mathcal{N}({\mathbf 0},\frac{\beta^2}{n} {\mathbf I}_d)$.  Note that
$(i)$ is proved in equation \eqref{PeSimplify}. \\

\begin{lemma}
$W = \frac{1 + H^{\T} V}{\sqrt{1 + 2H^{\T}V + \|V\|^2}}
  \xrightarrow{p} 0$, as $d \rightarrow \infty$.
\end{lemma}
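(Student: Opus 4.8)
The plan is to establish the claim by showing that $W$ is a ratio whose numerator converges to a constant while its denominator blows up. First I would exploit $\|H\|=1$ to rewrite the expression in the geometrically transparent form
\[
W = \frac{H^{\T}(H+V)}{\|H+V\|},
\]
using that $1 + H^{\T}V = H^{\T}(H+V)$ and $1 + 2H^{\T}V + \|V\|^2 = \|H+V\|^2$. This identifies $W$ as the cosine of the angle between $H$ and $H+V$, so in particular $|W| \le 1$ almost surely. This a priori bound is what will let me control $W$ on a high-probability event without worrying about the expression blowing up on the complementary event.

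Next I would control the two random quantities $A := H^{\T}V$ and $B := \|V\|^2$ separately. For $A$, conditioning on $H$ and using that $V \sim \mathcal{N}(\mathbf{0}, \tfrac{\beta^2}{n}\mathbf{I}_d)$ is independent of $H$ with $\|H\|=1$, the scalar $A$ is $\mathcal{N}(0,\beta^2/n)$, so $\var(A) = \beta^2/n \to 0$ (using $n \to \infty$) and hence $A \xrightarrow{p} 0$ by Chebyshev. For $B$, writing $B = \sum_{i=1}^d V_i^2$ as a sum of $d$ i.i.d.\ terms gives $\E[B] = d\beta^2/n$ and $\var(B) = 2d\beta^4/n^2$, so $\var(B)/\E[B]^2 = 2/d \to 0$; since the regime forces $\E[B] = \beta^2(d/n) \to \infty$ (this is exactly where $n/d \to 0$ enters), Chebyshev gives $B/\E[B] \xrightarrow{p} 1$ and therefore $B \xrightarrow{p} \infty$.

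Finally I would combine the two limits by an event-splitting argument. Given $\epsilon > 0$, choose $M$ so large that $2/\sqrt{M-1} < \epsilon$. On the event $\{|A| \le 1\} \cap \{B \ge M\}$ the numerator satisfies $|1+A| \le 2$ while the denominator satisfies $1 + 2A + B \ge B - 1 \ge M-1$, so $|W| \le 2/\sqrt{M-1} < \epsilon$ there. Since $\Pr(|A| > 1) \to 0$ and $\Pr(B < M) \to 0$ by the previous step, we get $\Pr(|W| > \epsilon) \le \Pr(|A|>1) + \Pr(B < M) \to 0$, which is exactly $W \xrightarrow{p} 0$.

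The only real subtlety, which I would flag as the crux, is this combination step and the associated regime bookkeeping: the two ingredients pull in opposite directions and draw on different parts of the asymptotic regime — $A \to 0$ requires $n \to \infty$, whereas $B \to \infty$ requires $d/n \to \infty$ — so the heuristic ``numerator $\to 1$, denominator $\to \infty$'' must be made uniform rather than taken pointwise. The a priori bound $|W| \le 1$ together with the explicit threshold $M$ is what makes that rigorous; the remaining work is just the two routine moment computations for $A$ and $B$.
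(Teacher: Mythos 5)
Your proof is correct, and it takes a genuinely different route from the paper's. The paper normalizes the whole expression by $\sqrt{n/d}$, shows $\tfrac{n}{d}\left(1+2H^{\T}V+\|V\|^2\right)\xrightarrow{p}\beta^2$ via an $L^2$ computation that tracks all cross terms (including $\E(H^{\T}V\|V\|^2)=0$ and the fourth moment of $\|V\|$), and then applies Slutsky's theorem to conclude that the un-normalized ratio tends to $1/\beta$ and hence $W=\sqrt{n/d}\cdot(1/\beta+o_p(1))\to 0$. You instead observe the identity $W=H^{\T}(H+V)/\|H+V\|$ (the cosine of the angle between $H$ and $H+V$), which immediately gives the a priori bound $|W|\le 1$, and then you only need the two cruder facts $H^{\T}V\xrightarrow{p}0$ and $\|V\|^2\xrightarrow{p}\infty$, combined by an explicit event-splitting/union-bound argument in place of Slutsky. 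Your version is more elementary and robust: it avoids the denominator's full concentration analysis (only $\var(B)/\E[B]^2=2/d\to 0$ is needed, not the exact limit $\beta^2 d/n$) and sidesteps any worry about the denominator being small, since $|W|\le 1$ always. What the paper's approach buys in exchange is the rate: it identifies $W\approx\sqrt{n/d}/\beta$, i.e., the precise scale at which $W$ vanishes, which your qualitative argument does not expose. Both proofs use the full asymptotic regime ($n\to\infty$ for the numerator, $d/n\to\infty$ for the denominator), and your explicit flagging of which hypothesis feeds which step is a point in your write-up's favor.
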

\begin{proof}
Since $H$ and $V$ are independent, $\E(H^{\T} V) \stackrel{}{=}
\E(H^{\T}) \E(V) = 0$ and
\begin{equation*}
\begin{split}
\var(H^{\T}V) &= \E(H^{\T} V V^{\T} H) = \E_{H} \E_{V | H} (H^{\T} V
V^{\T} H) \\
&\stackrel{}{=} \E_H \left(\frac{\beta^2}{n} H^{\T} H \right) =
\frac{\beta^2}{n} \longrightarrow 0
\end{split}
\end{equation*}
as $n,d\rightarrow \infty$.
As a result $1+H^{\T} V \xrightarrow{p} 1$. Next, we will show that
$\var(1 + 2H^{\T}V + \|V\|^2) =
\mathcal{O}\left(\frac{d}{n^2}\right)$. First, observe that $\E(1 +
2H^{\T}V + \|V\|^2) = 1 + \beta^2 \frac{d}{n}$. Thus
%
\begin{equation*}
\begin{split}
\var(1 + 2H^{\T}V + \|V\|^2) = & \E \left((1 + 2H^{\T}V + \|V\|^2)^2
\right) \\
& ~~~ - \left(1 + \beta^2 \frac{d}{n}\right)^2
\end{split}
\end{equation*}
Furthermore, we have
\begin{multline*}
%
%
\E \left((1 + 2H^{\T}V + \|V\|^2)^2 \right) = \underbrace{4\E(H^{\T}
  V V^{\T} H)}_{4 \beta^2/n} + \E(\|V\|^4) \\
+ \underbrace{4\E(H^{\T} V)}_0 + \underbrace{2\E(\| V \|^2)}_{2
  \beta^2 \frac{d}{n}} 
+ \underbrace{4\E(H^{\T} V \| V \|^2)}_{0} + 1
%
%
\end{multline*}
It remains to calculate $\E(\| V \|^4)$. Note that $\E(\| V \|^4) =
\var(V^{\T} V) + \beta^4 \frac{d^2}{n^2} $. But we know that for a
Gaussian random variable $\epsilon \sim \mathcal{N} (\mu, \Sigma)$,
and an arbitrary matrix $\Lambda$, we have $\var(\epsilon^{\T} \Lambda
\epsilon) = 2 \tr (\Lambda \Sigma \Lambda \Sigma) + 4 \mu^{\T} \Lambda
\Sigma \Lambda \mu$. Therefore, \hbox{$\E(\| V \|^4) = 2 \beta^4
  \frac{d}{n^2} + \beta^4 \frac{d^2}{n^2}$.}  Finally, we have
\begin{equation*}
\var(1 + 2H^{\T}V + \|V\|^2) = 4 \frac{\beta^2}{n} + 2 \beta^4
\frac{d}{n^2} = \mathcal{O}\left( \frac{d}{n^2} \right)
\end{equation*}
As a result, $\frac{n}{d} (1 + 2H^{\T}V + \|V\|^2) \xrightarrow{p}
\beta^2$, because it converges in $L^2$.  We have
\begin{equation*}
W = \sqrt{\frac{n}{d}} \frac{1 + H^{\T} V} {\sqrt{\frac{n}{d} \left( 1
    + 2 H^{\T} V + \| V \|^2 \right)}}
\end{equation*}
Note that the numerator and denominator go to 1 and $\beta$ in
probability, respectively. Therefore, using Slutsky's Theorem, the
whole fraction goes to $1/ \beta$ in probability. But
$\sqrt{\frac{n}{d}}$ goes to zero, therefore, $W \xrightarrow{p} 0$.
\end{proof}
Using Slutsky's theorem, \hbox{$Q\left(-W/\beta \right)
  \xrightarrow{p} \frac{1}{2}$.} Since \hbox{$0 \leq Q(.) \leq 1$},
using Dominated Convergence Theorem, $\lim\limits_{(d, n/d)
  \rightarrow (\infty, 0)} \E\left[ Q\left(-W/\beta^2 \right) \right]
= \frac{1}{2}$. Taking limit inferior of both sides of \eqref{LastEq},
we finally conclude that for any $\widehat{y}_{n,d}$
\begin{equation*}
\liminf\limits_{(d, n/d) \rightarrow (\infty, 0)} P_e(n, d,
\Theta_{\text{Sphere}}, \widehat{y}_{n,d}) \geq \frac{1}{2}
\end{equation*}
%
%
\end{proof}
\vspace{-1mm}
\subsection{Proof of Corollary 1}
%
%
For any classifier $\widehat{y}_{n,d}$, we have :
\begin{equation*}
\begin{split}
P_e(n, d, \Theta, \widehat{y}_{n,d}) & = \sup\limits_{\theta
  \in \Theta} P_{e | \theta}(\widehat{y}_{n,d}) \\
 & ~{\geq} \sup\limits_{\theta \in \Theta_{\text{Sphere}}} P_{e |
  \theta}(\widehat{y}_{n,d}) = P_e(n, d,
\Theta_{\text{Sphere}}, \widehat{y}_{n,d})
\end{split}
\end{equation*}
because $\Theta_{\text{Sphere}} \subseteq \Theta$. By taking limit
inferior of two sides, the Corollary is proved.
\subsection{Proof of Corollary 2}
%
%
Let \hbox{$\vol({\mathcal{H}}) \triangleq \Pr_{H \sim
    U(\mathcal{S}^{d-1})}(H \in \mathcal{H})$}, where
$U(\mathcal{S}^{d-1})$ denotes the uniform distribution over the unit
\hbox{$(d-1)$}-sphere in $d$ dimensional space. Suppose that
\hbox{$\limsup\limits_{(d, n/d) \rightarrow (\infty, 0)} P_e(n, d,
  \Theta_{\text{subset}}, \widehat{y}_{n,d}) <
  \frac{1}{2}$}. Hence \hbox{$\limsup\limits_{(d, n/d) \rightarrow
    (\infty, 0)} \E_{H \sim U(\mathcal{H})}(P_{e |
    \theta}(\widehat{y}_{n,d})) < \frac{1}{2}$}.  For the
specified sequence of classifiers $\widehat{y}_{n,d}$, which
satisfies the conditions of the corollary, we have
\begin{align*}
& \limsup\limits_{(d, n/d) \rightarrow (\infty, 0)} \E_{H \sim
    U(\mathcal{S}^{d-1})}(P_{e |
    \theta}(\widehat{y}_{n,d}))\\
& ~~~ = \limsup\limits_{(d, n/d) \rightarrow (\infty, 0)}
  \vol(\mathcal{H}) \E_{H \sim U(\mathcal{H})}(P_{e |
    \theta}(\widehat{y}_{n,d})) \\
& ~~~~~~~~ + \vol(\bar{\mathcal{H}}) \E_{H \sim
    U(\bar{\mathcal{H}})}(P_{e |
    \theta}(\widehat{y}_{n,d}))\\
& ~~~ \stackrel{}{\leq} \limsup\limits_{(d, n/d) \rightarrow
    (\infty, 0)} \hspace{-3 mm} \vol(\mathcal{H}) \hspace{-3 mm}
  \limsup\limits_{(d, n/d) \rightarrow (\infty, 0)} \hspace{-2 mm}
  \E_{H \sim U(\mathcal{H})}(P_{e |
    \theta}(\widehat{y}_{n,d})) \\
& ~~~~~~~~~~ + \hspace{-4 mm} \limsup\limits_{(d, n/d) \rightarrow
    (\infty, 0)} \hspace{-3 mm} \vol(\bar{\mathcal{H}}) \hspace{-3 mm}
  \limsup\limits_{(d, n/d) \rightarrow (\infty, 0)} \hspace{-2 mm}
  \E_{H \sim U(\bar{\mathcal{H}})}(P_{e |
    \theta}(\widehat{y}_{n,d})) \\
& ~~~ \stackrel{}{<} \frac{1}{2} \left( \lim\limits_{(d, n/d)
    \rightarrow (\infty, 0)} \vol(\mathcal{H}) + \lim\limits_{(d, n/d)
    \rightarrow (\infty, 0)} \vol(\bar{\mathcal{H}}) \right) \\
& ~~~~ {<} ~~ \frac{1}{2}
\end{align*}
which is a contradiction. 
%
\section{Concluding Remarks}
That prior knowledge such as sparsity improves inference in high
dimensional small sample settings is folklore. The results presented
here show that in fact such knowledge is absolutely indispensable in
that otherwise the asymptotic performance degenerates to a random coin
toss.  The results presented here focused on supervised binary
classification with Gaussian class conditional densities and equally
likely classes. One could expect similar conclusions to hold in more
complex inference problems. However the proof techniques used in this
work may not generalize to more complex and non-Gaussian settings.

\section*{Acknowledgment}
This article is based upon work supported by the U.S. AFOSR and
U.S. NSF under award numbers \#FA9550-10-1-0458 (subaward \#A1795) and
\#1218992 respectively.

\bibliographystyle{IEEEtran}
\bibliography{refs}

\end{document}